\DeclareTextSymbolDefault{\dh}{T1}
\newtheorem{theorem}{Theorem}
\let\OldStatex\Statex
\renewcommand{\Statex}[1][3]{%
  \setlength\@tempdima{\algorithmicindent}%
  \OldStatex\hskip\dimexpr#1\@tempdima\relax}
\def\eqnvspace{\vspace{-0.15cm}}
\begin{document}

\title{\LARGE \bf
Interleaving Graph Search and Trajectory Optimization \\ for Aggressive Quadrotor Flight
}

\author{Ramkumar Natarajan$^{1}$, Howie Choset$^{1}$ and Maxim Likhachev$^{1}$

\thanks{Manuscript received: July, 28, 2020; Revised November, 19, 2020; Accepted December, 21, 2021.}
\thanks{This paper was recommended for publication by
Editor Jonathan Roberts upon evaluation of the Associate Editor and Reviewers’ comments.} 
\thanks{The authors are with The Robotics Institute at Carnegie Mellon University, Pittsburgh, PA 15213, USA {\tt\small \{rnataraj, choset, maxim\}@cs.cmu.edu}}
\thanks{Digital Object Identifier (DOI): see top of this page.}
}

\markboth{IEEE Robotics and Automation Letters. Preprint Version. Accepted February 2021}{Natarajan \MakeLowercase{\textit{et al.}}: Interleaving Graph Search and Trajectory Optimization for Aggressive Quadrotor Flight}

\renewcommand\intercal{{\cramped{{}^\mathsf{T}}}}

\maketitle

\begin{abstract}
Quadrotors can achieve aggressive flight by tracking complex maneuvers and rapidly changing directions. Planning for aggressive flight with trajectory optimization could be incredibly fast, even in higher dimensions, and can account for dynamics of the quadrotor, however, only provides a locally optimal solution. On the other hand, planning with discrete graph search can handle non-convex spaces to guarantee optimality but suffers from exponential complexity with the dimension of search. We introduce a framework for aggressive quadrotor trajectory generation with global reasoning capabilities that combines the best of trajectory optimization and discrete graph search. Specifically, we develop a novel algorithmic framework that \textit{interleaves} these two methods to complement each other and generate trajectories with provable guarantees on completeness up to discretization. We demonstrate and quantitatively analyze the performance of our algorithm in challenging simulation environments with narrow gaps that create severe attitude constraints and push the dynamic capabilities of the quadrotor. Experiments show the benefits of the proposed algorithmic framework over standalone trajectory optimization and graph search-based planning techniques for aggressive quadrotor flight.

\end{abstract}

\section{Introduction}
\lettrine{Q}{}uadrotors' exceptional agility and ability to track and execute complex maneuvers, fly through narrow gaps and rapidly change directions make motion planning for aggressive quadrotor flight an exciting and important area of research  \cite{cutlervarpitch, mellingerprecise, andreajuggling}. In order to enable such agile capabilities, motion planning should consider the dynamics and the control limits of the robot. The three distinct approaches for motion planning with dynamics are: (a) optimal control techniques, like trajectory optimization \cite{scaramuzzaonboard, attitudeoptimal, roypoly}, (b) kinodynamic variants of sampling based planning \cite{kinodynamic} and (c) search based planning over lattice graphs \cite{sikangse3}. LQR trees explores the combination of sampling methods (\textit{i.e.} (b)) with trajectory optimization (\textit{i.e.} (a)) and successfully demonstrates in real-world dynamical systems \cite{lqrtrees}. However, it is an offline method to fill the entire state space with lookup policies that takes extremely long time to converge even for low-dimensional systems. In part inspired by LQR trees, in this paper, we explore an effective approach to combining trajectory optimization (\textit{i.e.} (a)) with search-based planning (\textit{i.e.} (c)) to develop an online planner and demonstrate it on a quadrotor performing aggressive flight.


\begin{figure}[h]
    \center
    \def\svgwidth{0.6\columnwidth}
    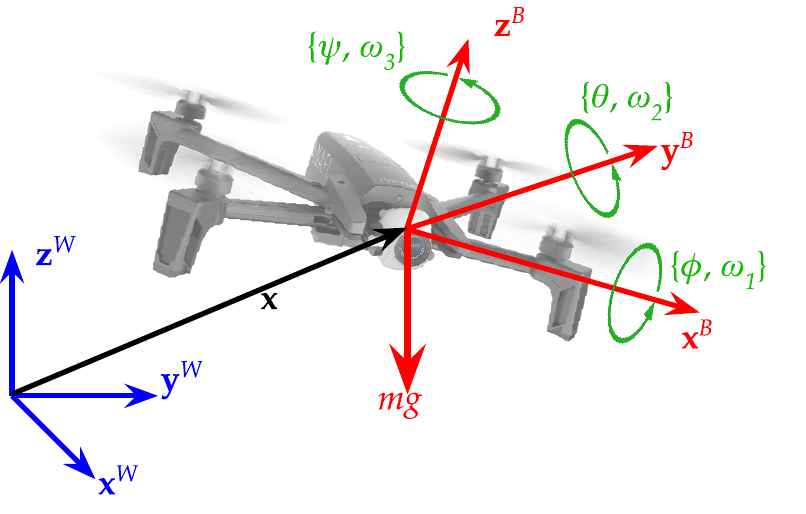
    \vspace{-0.2cm}
    \caption{\small Inertial (blue, superscript $W$) and the body fixed frame (red, superscript $B$) of the quadrotor. Origin of body fixed frame is at the quadrotor's center of mass. The direction of roll $\phi$, pitch $\theta$ and yaw $\psi$ and the corresponding angular velocities are shown in green.}
    \label{fig:quad}
\vspace{-0.7cm}
\end{figure}

To appreciate the potential of interleaving graph search and trajectory optimization, it is important to understand the trade-offs. Search-based planning has global reasoning capabilities and has proven successful in solving numerous robot motion planning problems \cite{anytime, maxara}. Despite that, planning for complex dynamical systems using search-based techniques still remains an uncharted area due to the challenge of discretizing an inherently continuous class of systems. For instance, when planning for a quadrotor with attitude constraints, the state space should contain all the pose variables and their finite derivatives to ensure kinodynamic feasibility. One way to deal with this high-dimensional search is to sparsely discretize the action space which impedes the planner's completeness guarantees. Consequently, trajectory optimization is a standard choice to deal with continuous actions and exploit the dynamic capabilities of the system but these are local methods and do not solve the full planning problem \cite{scaramuzzaonboard, attitudeoptimal, estquad}. 

Our contribution in this work is the novel framework called INSAT: INterleaved Search And Trajectory optimization for fast, global kinodynamic planning for aggressive quadrotor flight with completeness guarantees. The key idea behind our framework is (a) to identify a low-dimensional manifold, (b) perform a search over a grid-based graph that discretizes this manifold, (c) while searching the graph, utilize high-dimensional trajectory optimization to compute the cost of partial solutions found by the search. As a result, the search over the lower-dimensional graph decides what trajectory optimizations to run and with what seeds, while the cost of solution from the trajectory optimization drives the search in the lower-dimensional graph until a feasible high-dimensional trajectory from start to goal is found.
This paper is structured as follows: we discuss the related work in Sec. \ref{sec:relwork} and summarize the differential flatness property of the quadrotor which helps us to lift the low-dimensional discrete trajectory to high dimension in Sec. \ref{sec:prelim}. We formally define our problem in Section \ref{sec:prob} and describe our proposed method with its guarantees on completeness in Sec. \ref{sec:topus}. Finally, we show the experimental results in Sec. \ref{sec:results}, and conclude with future directions in Sec. \ref{sec:conc}. The code used in this work is open-sourced at \textproc{\url{https://github.com/nrkumar93/insat}}. 

\section{Related Work}
\label{sec:relwork}
Polynomial trajectory planning \cite{roypoly} jointly optimizes the pieces of a piecewise polynomial trajectory for flat systems \cite{murraydf} in a numerically robust and unconstrained fashion. It is a sequential method that uses a piecewise linear path as a seed for polynomial trajectory generation. Consequently, they do not handle attitude constraints for narrow gaps or perform global reasoning in case a part of the seed plan is invalid. Several prior works have demonstrated aggressive maneuvers for quadrotors that pass through narrow gaps \cite{scaramuzzaonboard, attitudeoptimal, estquad} but, instead of solving the planning problem, those works focus on trajectory optimization with given attitude constraints. Those constraints are often hand-picked beforehand or obtained using gap detection algorithms which only works for specific cases. 

Aggressive quadrotor planning for large environments typically involves building a safe corridor \cite{sikangsfc} such as convex decomposition of the free space \cite{tedrakeiris, tedrakeminp}. These methods do not deal with attitude constraints and hence there is no guarantee on planner's completeness when the robot has to aggressively negotiate a narrow gap. Liu's work \cite{sikangse3, sikanglqmt} on lattice search with predefined primitives for aggressive flight is the first method that attempts to incorporate quadrotor shape and dynamics in planning for large environments. It uses search-based methods to synthesize a plan over the lattice. However, lattice search suffers from the curse of dimensionality and their performance significantly depends on the choice of discretization for the state and the action space. Barring the interplay of low and high-dimensional search, our work is reminiscent of Theta* \cite{thetastar} as both the methods proceed by rewiring each successor to the best ancestor possible. However, Theta* is a planning algorithm designed specifically for 2D and 3D grid search and not applicable to higher dimensional planning like ours. 

Sampling-based robot motion planning has a rich history owing to their 
simplicity and scalability to higher dimensions \cite{prm, rrtstar}. But for kinodynamic planning, they rely on the ``steer'' operator which is often not efficient to compute \cite{kinodynamic}. They also suffer from the narrow passage problem \cite{narrowprm}, take longer time to converge to a good quality path and have unreliable intermediate path quality \cite{rrtstar}. Despite that, sampling-based trajectory optimization methods like LQR trees \cite{lqrtrees} with very high convergence time have enjoyed success and even been applied to hybrid systems \cite{mylqr}. These methods focus on the conditions for guaranteed execution based on the geometry of the trajectory funnels and the obstacles and even demonstrate it on a spherical quadrotor \cite{realtimelqr}. However, deriving such relations become extremely hard or almost impossible if the quadrotor is approximated as an ellipsoid. 



\section{Differential Flatness and Control of a Quadrotor}
\label{sec:prelim}

The quadrotor dynamics with four inputs (net thrust and the body moment about each axis) is differentially flat \cite{mellingerflat}. In other words, the states and inputs can be written as algebraic functions of the so-called flat outputs, $x$, $y$, $z$, and $\psi$ (yaw) and their derivatives. However, since the yaw is decoupled and does not affect the system dynamics, we do not consider it during planning. The Newton's equation of motion governing the acceleration of center of mass and the angular velocity of a standard quadrotor in terms of the flat outputs are
\eqnvspace
\begin{equation}
    m\ddot{\textbf{x}} = -mg\textbf{z}^W + f^B\textbf{z}^B
\label{eq:flatacc}
\eqnvspace
\end{equation}
\begin{equation}
    \boldsymbol{\omega}^B = 
    \begin{bmatrix}
        \omega_2 \\
        -\omega_1 \\
        \omega_3
    \end{bmatrix}=\frac{1}{f^B}
    \begin{bmatrix}
        1 & 0 & 0 \\ 
        0 & 1 & 0 \\ 
        0 & 0 & 0 
    \end{bmatrix}\textbf{R}^{-1}\dddot{\textbf{x}}
\label{eq:angvel}
\eqnvspace
\end{equation}
where $\textbf{x}$ is the position vector of the robot's center of mass in the inertial frame, $m$ is its mass, $g$ is the acceleration due to gravity, \textbf{R} describes the rotation of the body frame $B$ with respect to the inertial frame $W$, $\boldsymbol{\omega}^B$ and $f^B$ are the angular velocity vector and net thrust in the body-fixed coordinate frame (Fig. \ref{fig:quad}). $\textbf{z}^B$ is the unit vector aligned with the axis of the four rotors and indicates the direction of thrust, while $-\textbf{z}^W$ is the unit vector expressing the direction of gravity. 

The flatness property lets us calculate the quadrotor's orientation from the flat outputs and its derivatives. We make a useful observation from Eq. \ref{eq:flatacc} that the quadrotor can only accelerate in the direction of thrust and hence the attitude (roll and pitch) is constrained given the thrust vector. This constraint mapping is invertible and hence we can recover the direction of acceleration from attitude. In Sec. \ref{sec:trajopt}, we will describe and explicitly derive how the magnitude of acceleration is calculated by getting rid of the free variable $f^B$ in Eq. \ref{eq:flatacc}.

Following \cite{andreaindep}, we use triple integrator dynamics with jerk input for quadrotor planning. Trajectory segments consisting of three polynomial functions of time, each specifying the independent evolution of $x$, $y$, $z$, is used for quadrotor planning between two states in the flat output space \cite{cowlingpoly, roypoly, andreaindep}. As the desired trajectory and its derivatives are sufficient to compute the states and control inputs in closed form, they serve as a simulation of the robot’s motion in the absence of disturbances. This powerful capability is enabled by differential flatness that eliminates the need for iterated numerical integration of equations of motion, or a search over the space of inputs during each iteration of the planning algorithm.




\section{Problem Statement}
\label{sec:prob}
Let $\boldsymbol{\sigma}$ denote the translational variables of the quadrotor including its position, velocity, acceleration and jerk, \(\boldsymbol{\sigma} = [\textbf{x}\intercal, \textbf{\.x}\intercal, \textbf{\"x}\intercal, \dddot{\textbf{x}}\intercal]\intercal \in \mathbb{R}^{12}\). The 3D pose of the quadrotor is given by the position of its center of mass $\textbf{x}=[x, y, z]\intercal$ and orientation (in Euler angles) $\boldsymbol{\Theta}=[\phi, \theta, \psi]\intercal$ in the inertial frame. Given (a) an initial state $\textbf{s}_0 = [\boldsymbol{\sigma}\intercal_0, \boldsymbol{\Theta}_0\intercal, (\boldsymbol{\omega}_0^{B})\intercal, (\boldsymbol{\alpha}_0^B)\intercal]\intercal$ where $\boldsymbol{\omega}^{B}$ and $\boldsymbol{\alpha}^B$ are the angular velocity and angular acceleration of the body frame $B$, (b) a goal region $\mathcal{X}^{goal}$, (c) the planning space $\mathcal{X}$ with the obstacles $\mathcal{X}^{obs}$, the task is to find an optimal trajectory  \(\boldsymbol{\sigma}^*(t) = [\textbf{x}^*(t)\intercal, \textbf{\.x}^*(t)\intercal, \textbf{\"x}^*(t)\intercal, \dddot{\textbf{x}}^*(t)\intercal]\intercal\) according to Eq. \ref{eq:obj}, where $\textbf{x}^*(t) \in \mathcal{X} \setminus \mathcal{X}^{obs}$, $t \in [0, T]$ or the corresponding control inputs $\textbf{u}^*(t)$, $t \in [0, T]$. $\mathcal{X}^{obs}$ represents all the configurations of the robot that are in collision (Sec. \ref{sec:collfeas}) with its shape taken into consideration. 

For aggressive flight, the dynamical constraints of the quadrotor in terms of thrust and torques that can be supplied by the motors have to be satisfied while planning. Using the differential flatness property, these control saturation can be converted to componentwise box constraints on velocity, acceleration and jerk on each axis independently \cite{andreaeffiprim} as \( |\textbf{\.x}(t)| \preceq \textbf{\.x}_{max}, |\textbf{\"x}(t)| \preceq \textbf{\"x}_{max}, |\dddot{\textbf{x}}(t)| \preceq \dddot{\textbf{x}}_{max}\). Thus the time-optimal path-planning for aggressive quadrotor flight can be cast as the following optimization problem: 
\eqnvspace
\begin{equation}
\begin{aligned}
    \min_{\textbf{x}(t), \textbf{u}(t), T}  \quad & J_{total} = \int_0^T \| \dddot{\textbf{x}}(t)\|^2 + \gamma T \\ 
\textrm{s.t.} \quad & \textbf{\.x} = F(\textbf{x}, \textbf{u}), \\
& \textbf{x}(0) = \textbf{x}_0, \\
& \textbf{x}(T) \in \mathcal{X}^{goal}, \\
& |\textbf{\.x}(t)| \preceq \textbf{\.x}_{max}, |\textbf{\"x}(t)| \preceq \textbf{\"x}_{max}, |\dddot{\textbf{x}}(t)| \preceq \dddot{\textbf{x}}_{max}\\
& \textbf{x}(t) \in \mathcal{X} \setminus \mathcal{X}^{obs}, \textbf{u} \in \textbf{U} \ \  \forall t \in [0, T]
\end{aligned}
\label{eq:obj}
\eqnvspace
\end{equation}
where $F$ and \textbf{U} denote the quadrotor dynamics and the set of all attainable control vectors, $J_{total}$ is total cost of the trajectory and $\gamma$ is the penalty to prioritize control effort over execution time $T$. It is sufficient to find the optimal trajectory purely in terms of translational variables as the reminder of state can be recovered using the results of differential flatness.



\section{Motion Planning For Aggressive Flight}
\label{sec:topus}
Our trajectory planning framework consists of two \textit{overlapping} modules: a grid-based graph search planner and a trajectory optimization routine. These two methods are interleaved to combine the benefits of former's ability to search non-convex spaces and solve combinatorial parts of the problem and the latter's ability to obtain a locally optimal solution not constrained to the discretized search space. We provide analysis (Sec. \ref{sec:analysis}) and experimental evidence (Sec. \ref{sec:results}) that interleaving provides a superior alternative in terms of quality of the solution and behavior of the planner than the naive option of running them in sequence \cite{roypoly}. 

We begin by providing a brief overview of the polynomial trajectory optimization setup. This will be followed by the description of the INSAT framework and how it utilizes graph search and polynomial trajectory generation. We then analyse INSAT's guarantees on completeness.

\subsection{Attitude Constrained Joint Polynomial Optimization}
\label{sec:trajopt}
To generate a minimum-jerk and minimum-time trajectory, the polynomial generator should compute a thrice differentiable trajectory that guides the quadrotor from an initial state to a partially defined final state by respecting the spatial and dynamic constraints while minimizing the cost function given in Eq. \ref{eq:obj}. For quadrotors, it is a common practice to consider triple integrator dynamics and decouple the trajectory generation \cite{andreaindep, sikangse3} into three independent problems along each axis. However, for attitude constrained flight, although the dynamic inversion provided by the flatness property aids in determining the direction of acceleration from the desired attitude, the corresponding magnitude cannot be computed by axis independent polynomial optimization. We note from Eq. \ref{eq:flatacc} that the thrust supplied by the motors $f^B$ is a free variable which can be eliminated to deduce a constraint relationship between the components of the acceleration vector $\ddot{\textbf{x}}$ and the direction of thrust in body frame $\textbf{z}^B$ as follows
\eqnvspace
\begin{equation}
     \frac{\ddot{\textbf{x}}_x}{\textbf{z}^B_x}= \frac{\ddot{\textbf{x}}_y}{\textbf{z}^B_y} = \frac{\ddot{\textbf{x}}_z - g}{\textbf{z}^B_z}
     \label{eq:constraint}
\eqnvspace
\end{equation}
where $\ddot{\textbf{x}}_i$ and $\textbf{z}_i^B$ are the axis-wise components of acceleration and thrust vector. Rearranging the terms in Eq. \ref{eq:constraint} provides a linear constraint on acceleration independent of the thrust
\eqnvspace
\begin{equation}
\underbrace{\begin{bmatrix}
-\textbf{z}^B_y & \textbf{z}^B_x & 0\\
-\textbf{z}^B_z & 0 & \textbf{z}^B_y \\
0 & -\textbf{z}^B_z & \textbf{z}^B_y
\end{bmatrix}}_\textbf{W}
\begin{bmatrix}
    \ddot{\textbf{x}}_x \\ 
    \ddot{\textbf{x}}_y \\ 
    \ddot{\textbf{x}}_z
\end{bmatrix} = 
\underbrace{\begin{bmatrix}
    0 \\ 
    g\textbf{z}^B_x \\
    g\textbf{z}^B_y
\end{bmatrix}}_\textbf{d}
    \label{eq:constraintmatrix}
\eqnvspace
\end{equation}
\begin{equation}
    \textbf{W}\ddot{\textbf{x}} = \textbf{d}
    \label{eq:constraintshort}
\eqnvspace
\end{equation}
We incorporate the constraint derived above in the joint polynomial optimization method introduced in \cite{roypoly} to find a sequence of polynomials through a set of desired attitude constrained waypoints. Thus, the first term of the cost function in Eq. \ref{eq:obj} can be transformed into product of coefficients of $M$ polynomials and their Hessian with respect to $N$ coefficients per polynomial thereby forming a quadratic program (QP)
\eqnvspace
\begin{equation}
    J_{control} = \int_0^T \| \dddot{\textbf{x}}(t)\|^2 = \textbf{p}\intercal\textbf{H}\textbf{p}
    \label{eq:hessian}
\eqnvspace
\end{equation}
where \textbf{p}$\in \mathbb{R}^{MN}$ represents all the polynomial coefficients grouped together and $\textbf{H}$ is the block Hessian matrix with each block corresponding to a single polynomial. Note that the integrand encodes the sequence of polynomial segments as opposed to just one polynomial and each block of the Hessian matrix is a function of time length of the polynomial segment. We omit the details for brevity and defer the reader to \cite{roypoly} for a comprehensive treatment. Following \cite{roypoly}, the requirement to satisfy the position constraints and derivative continuity is achieved by observing that the derivatives of the trajectory are also polynomials whose coefficients depend linearly on the coefficients of the original trajectory. In our case, in addition to position and continuity constraints we have to take the attitude constraints into account via acceleration using Eq. \ref{eq:constraintshort}.
\eqnvspace
\begin{equation}
    \textbf{A}\textbf{p} = \begin{bmatrix}
        \dot{\textbf{x}} \\ \ddot{\textbf{x}} \\ \dddot{\textbf{x}}
    \end{bmatrix} \implies \textbf{A}\textbf{p} = \underbrace{\begin{bmatrix}
        \textbf{b} \\
    \textbf{W}^{-1}\textbf{d}
    \end{bmatrix}}_\text{\textbf{c}} \implies \textbf{p} = \textbf{A}^{-1}\textbf{c}
    \label{eq:fullconstraint}
\eqnvspace
\end{equation}
where the matrix \textbf{A} maps the coefficients of the polynomials to their endpoint derivatives and \textbf{b} contains all other derivative values except acceleration which is obtained using Eq. \ref{eq:constraintshort}. Using Eq. \ref{eq:fullconstraint} in Eq. \ref{eq:hessian} 
\eqnvspace
\begin{equation}
    J_{control} = \textbf{c}\intercal\textbf{A}^{-}\intercal\textbf{H}\textbf{A}^{-1}\textbf{c} 
    \label{eq:unconstrained}
\eqnvspace
\end{equation}

Note that due to the interdependent acceleration constraint (Eq. \ref{eq:constraintmatrix}) imposed at the polynomial endpoints, we lost the ability to solve the optimization independently for each axis. Nevertheless, the key to the efficiency of our approach lies in the fact that solving a QP like Eq. \ref{eq:hessian} subject to linear constraints in Eq. \ref{eq:fullconstraint} or in their unconstrained format in Eq. \ref{eq:unconstrained} is incredibly fast and robust to numerical instability. Thus the total jerk and time cost $J_{total}$ to be minimized becomes 
\eqnvspace
\begin{equation}
    J_{total} = \underbrace{\textbf{c}\intercal\textbf{A}^{-}\intercal\textbf{H}\textbf{A}^{-1}\textbf{c}}_{J_{control}} + \underbrace{\gamma \sum_{i=1}^M T_i}_{J_{time}}
    \label{eq:totalcost}
\eqnvspace
\end{equation}
where $T_i$ expresses the time length of the $i$th polynomial. As mentioned before, the Hessian depends on the choice of time length of the polynomial segment and hence the overall cost is minimized by running a gradient descent on $T_i$ and evaluating $J_{control}$ corresponding to a particular $T_i$. 


\subsection{INSAT: Interleaving Search And Trajectory Optimization}
\label{sec:insatalg}
To plan a trajectory that respects system dynamics and controller saturation, and simultaneously reason globally over large non-convex environments, it is imperative to maintain the combinatorial graph search tractable. To this end, we consider a low-dimensional space $\mathcal{X}_L$ (5D) comprising $\{\textbf{x}^\intercal, \phi, \theta \}$. The discrete graph search runs in $\mathcal{X}_L$ which typically contains variables of the state whose domain is non-convex. It then seeds the trajectory optimization, such as the one in Sec. \ref{sec:trajopt}, in the high-dimensional space $\mathcal{X}_H$ (12D) comprising $\{ \textbf{x}\intercal, \textbf{\.x}\intercal,$ $\textbf{\"x}\intercal, \dddot{\textbf{x}}\intercal\}$, to in turn obtain a better estimate of the cost-to-come value of a particular state for the graph search. The subscripts $L$ and $H$ refer to the low and high-dimensional states.

Alg. \ref{alg:topus} presents the pseudocode of INSAT. Let $\textbf{s}_L\in \mathcal{X}_L$ and $\textbf{s}_H\in \mathcal{X}_H$ be the low-dimensional and high-dimensional state. The algorithm takes as input the high-dimensional start and goal states $\textbf{s}_H^{start}, \textbf{s}_H^{goal}$ and recovers their low-dimensional counterparts $\textbf{s}_L^{start}, \textbf{s}_L^{goal}$ (lines \ref{line:ldrecstart}-\ref{line:ldrecend}). The low-dimensional free space $\mathcal{X}_L\setminus\mathcal{X}^{obs}$ is discretized to build a graph $\mathcal{G}_L$ to search. To search in $\mathcal{G}_L$, we use weighted A* (WA*)\cite{pohlwastar} which maintains a priority queue called OPEN that dictates the order of expansion of the states and the termination condition based on \textproc{Key($\textbf{s}_L$)} value (lines \ref{line:key}, \ref{line:term}). Alg. \ref{alg:topus} maintains two functions: cost-to-come $g(\textbf{s}_L)$ and a heuristic $h(\textbf{s}_L)$. $g(\textbf{s}_L)$ is the cost of the current path from the start state to $\textbf{s}_L$ and $h(\textbf{s}_L)$ is an underestimate of the cost of reaching the goal from $\textbf{s}_L$. WA* initializes OPEN with $\textbf{s}_L^{state}$ (line \ref{line:init}) and keeps track of the expanded states using another list called CLOSED (line \ref{line:closed}). 

\setlength{\textfloatsep}{4pt}
\begin{algorithm}
\begin{algorithmic}[1]

\Procedure{Key}{$\textbf{s}_L$} \label{line:key}
\State \textbf{return} $g(\textbf{s}_L) + \epsilon*h(\textbf{s}_L)$
\EndProcedure


\Procedure{GenerateTrajectory}{$\textbf{s}_L$, $\textbf{n}_L$}
\State $\textbf{z}^B = [sin(\textbf{n}_L^\theta),\  -cos(\textbf{n}_L^\theta)sin(\textbf{n}_L^\phi),\ cos(\textbf{n}_L^\phi)cos(\textbf{n}_L^\theta) ]^\top $ 
\State $\textbf{n}_H^{\ddot{\textbf{x}}}$ = $\textbf{W}^{-1}\textbf{d}$ \Comment{Differential flatness Eq. \ref{eq:constraintmatrix}}\label{line:dfmap2}
\State $\textbf{c} = [(\textbf{s}_H^{start})\intercal, (\textbf{n}_H)\intercal, (\textbf{s}_H^{goal})\intercal]\intercal$ \Comment{Eq. \ref{eq:fullconstraint}}\label{line:trajtop11}
\State $\textbf{n}_H(t)$ = $TrajectoryOptimizer(\textbf{c})$ \Comment{Eq. \ref{eq:totalcost}}\label{line:trajtop12}
\State \textbf{if} $\textbf{n}_H(t_{\textbf{n}_H}).IsCollisionFree() \bigwedge$\Comment{Sec. \ref{sec:collfeas}}
\Statex[1.33] $\>$ $\textbf{n}_H(t_{\textbf{n}_H}).IsInputFeasible()$ \textbf{then} \Comment{Sec. \ref{sec:infeas}}\label{line:feas1}
\State $\>$ \textbf{return} $\textbf{n}_H(t)$ \label{line:retbasicphase}
\State \textbf{else} \label{line:repair}
\State $\>$ \textbf{for} $\textbf{m}_H \in Waypoints(\textbf{s}_H(t))$ \textbf{do} \label{line:wpsearch}
\State $\>$ $\>$ $\textbf{c} = [(\textbf{m}_H)\intercal, (\textbf{n}_H)\intercal, (\textbf{s}_H^{goal})\intercal]\intercal$ \Comment{Eq. \ref{eq:fullconstraint}}\label{line:trajtop21}
\State $\>$ $\>$ $\textbf{r}_H(t)$ = $TrajectoryOptimizer(\textbf{c})$ \Comment{Eq. \ref{eq:totalcost}}\label{line:trajtop22}
\State $\>$ $\>$ \textbf{if} $\textbf{r}_H(t_{\textbf{n}_H}).IsCollisionFree() \bigwedge$ \Comment{Sec. \ref{sec:collfeas}}
\Statex[1.79] $\>$ $\textbf{r}_H(t_{\textbf{n}_H}).IsInputFeasible()$ \textbf{then} \Comment{Sec. \ref{sec:infeas}}\label{line:feas2}
\State $\>$ $\>$ $\>$ $\textbf{n}_H(t) = Concatenate(\textbf{m}_H(t), \textbf{r}_H(t))$\label{line:concat}
\State $\>$ $\>$ $\>$ $RelaxConstraintsAndWarmStart(\textbf{n}_H(t))$ \label{line:warm}
\State $\>$ $\>$ $\>$ \textbf{return} $\textbf{n}_H(t)$ \label{line:retrepair}
\State \textbf{return} Tunnel traj. w/ discrete $\infty$ cost \Comment{Sec. \ref{sec:completeness}}\label{line:tunnel}
\EndProcedure

\Procedure{Main}{$\textbf{s}_H^{start}, \textbf{s}_H^{goal}$}
\State $(\textbf{s}_L^{start})^\textbf{x} = (\textbf{s}_H^{start})^\textbf{x}$; \hspace{0.1cm} $(\textbf{s}_L^{goal})^\textbf{x} = (\textbf{s}_H^{goal})^\textbf{x}$ \label{line:ldrecstart}
\State $(\textbf{s}_L^{start})^{\phi, \theta} =$ Obtain from $(\textbf{s}_H^{start})^{\ddot{\textbf{x}}}$ \Comment{Eq. \ref{eq:flatacc}}
\State $(\textbf{s}_L^{goal})^{\phi, \theta} =$ Obtain from $(\textbf{s}_H^{goal})^{\ddot{\textbf{x}}}$ \Comment{Eq. \ref{eq:flatacc}}\label{line:ldrecend}
\State $\forall \textbf{s}_L, g(\textbf{s}_L) = \infty$; $g(\textbf{s}_L^{start}) = 0$
\State Insert $\textbf{s}_L^{start}$ in OPEN with \textproc{Key}($\textbf{s}_L^{start}$) \label{line:init}
\State \textbf{while} \textproc{Key}($\textbf{s}_L^{goal}$) $< \infty$ \textbf{do} \label{line:term}
\State $\>$ $\textbf{s}_L =$ OPEN.$pop()$ \label{line:pq} 
\State $\>$ \textbf{for} $\textbf{s}^\prime_L \in Succ(\textbf{s}_L)$ \textbf{do}  \label{line:succ}
\State $\>$ $\>$ $\textbf{n}_L = SoftCopy(\textbf{s}^\prime_L)$\label{line:softcopy}
\State $\>$ $\>$ \textbf{if} $\textbf{n}_L \in $ CLOSED \textbf{then} \label{line:closed}
\State $\>$ $\>$ $\>$ $\textbf{n}_L = DeepCopy(\textbf{s}^\prime_L)$; $g(\textbf{n}_L) = \infty$ \Comment{Sec. \ref{sec:completeness}}\label{line:deepcopy}
\State $\>$ $\>$ $\textbf{n}_H(t)$ = \textproc{GenerateTrajectory}($\textbf{s}_L$, $\textbf{n}_L$)
\State $\>$ $\>$ \textbf{if} $\textbf{n}_H(t).IsCollisionFree() \bigwedge$ \Comment{Sec. \ref{sec:collfeas}}
\Statex[1.7] $\>$ $\>$ $\textbf{n}_H(t).IsInputFeasible()$ \textbf{then} \Comment{Sec. \ref{sec:infeas}}\label{line:feas3}
\State $\>$ $\>$ $\>$ $g(s_L^{goal}) = J_{total}(\textbf{n}_H(t))$ \Comment{Eq. \ref{eq:obj}}
\State $\>$ $\>$ $\>$ Insert/Update $\textbf{n}_L$ in OPEN with \textproc{Key}($s_L^{goal}$)\label{line:goalcand}
\State $\>$ $\>$ \textbf{if}  $J_{total}(\textbf{n}_H(t_{\textbf{n}_H})) < g(\textbf{n}_L)$ \textbf{then} \Comment{Eq. \ref{eq:obj}}
\State $\>$ $\>$ $\>$ $g(\textbf{n}_L) = J_{total}(\textbf{n}_H(t_{\textbf{n}_H}))$ \Comment{Eq. \ref{eq:obj}}
\State $\>$ $\>$ $\>$ Insert/Update $\textbf{n}_L$ in OPEN with \textproc{Key}($\textbf{n}_L$)
\EndProcedure
\end{algorithmic}
\caption{INSAT}
\label{alg:topus}
\end{algorithm}

\begin{figure*}[h]
    \center
    \def\svgwidth{\textwidth}
    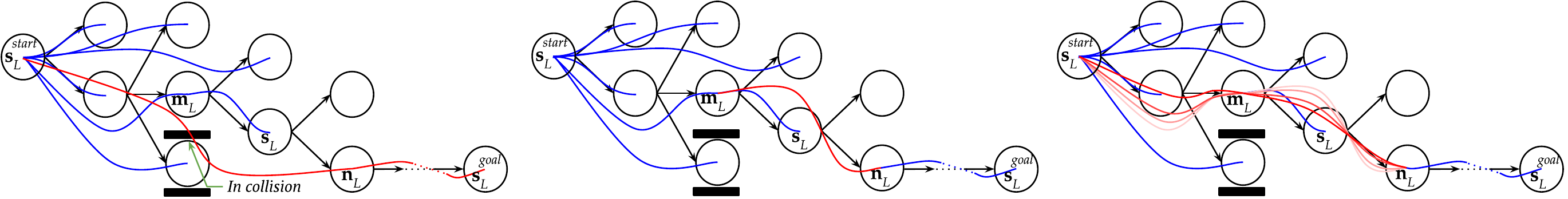
    \caption{\small Graphical illustration of the \textproc{GenerateTrajectory()} function of INSAT (Alg \ref{alg:topus}). Here the state $\textbf{s}_L$ is expanded and a trajectory is optimized for its successor $\textbf{n}_L$. LEFT: At first, the optimizer tries to find a trajectory directly from start to goal via $\textbf{n}_H$ ($\textbf{n}_L$'s high-dimensional counterpart) as shown in red (lines \ref{line:trajtop11}-\ref{line:trajtop12}). CENTER: If the portion of the trajectory from the first attempt up to $\textbf{n}_L$ is input infeasible or in collision (as in LEFT), then instead of the start state the earliest possible waypoint $\textbf{m}_H$ ($\textbf{m}_L$'s high-dimensional counterpart) on the high-dimensional trajectory $\textbf{s}_H(t)$ is selected and a new trajectory segment is incrementally optimized (shown in red) as in lines \ref{line:wpsearch}-\ref{line:feas2}. RIGHT: Once a set of collision free and feasible trajectory segments are found, we refine the trajectory by relaxing all the waypoint and derivative constraints (convergence shown with different shades of red). Note that this stage can consist of several polynomials being jointly optimized, however, the convergence is extremely fast due to warm starting (line \ref{line:warm}).}
    \label{fig:topus}
\vspace{-0.7cm}
\end{figure*}

A graphical illustration of the algorithm is provided in Fig. \ref{fig:topus}. Each time the search expands a state $\textbf{s}_L$, it removes $\textbf{s}_L$ from OPEN and generates the successors as per the discretization (lines \ref{line:pq}-\ref{line:softcopy}). For every low-dimensional successor $\textbf{n}_L$, we solve a trajectory optimization problem described in Sec. \ref{sec:trajopt} to find a corresponding high-dimensional trajectory from start to goal via $\textbf{n}_H$ (lines \ref{line:trajtop11}-\ref{line:trajtop12}, Fig \ref{fig:topus}). Note that the trajectory optimization is performed in the space of translational variables but $\textbf{n}_L$ specifies an attitude requirement. So prior to trajectory optimization, we utilize the differential flatness property to transform the attitude of the quadrotor to an instantaneous direction and magnitude of acceleration $\textbf{n}_H^{\ddot{\textbf{x}}}$ to be satisfied (line \ref{line:dfmap2}, Eq. \ref{eq:constraintmatrix}). The trajectory optimization output $\textbf{n}_H(t)$ is checked for collision and control input feasibility (line \ref{line:feas1}, Sec. \ref{sec:feas}). If the optimized trajectory $\textbf{n}_H(t)$ is in collision or infeasible (Fig. \ref{fig:topus}-Left), the algorithm enters the repair phase (lines \ref{line:repair}-\ref{line:retrepair}). 

The repair phase is same as the first call to the optimizer except that instead of the start state $\textbf{s}_H^{start}$, we iterate over the waypoints $\textbf{m}_H$ (line \ref{line:wpsearch}) of the parent state's trajectory $\textbf{s}_H(t)$ in order (lines \ref{line:wpsearch}-\ref{line:feas2}, Fig. \ref{fig:topus}-Center). It has to be noted that the computational complexity of trajectory optimization QP is same for both the initial attempt and the repair phase as the sequence of polynomials from $\textbf{s}_H^{start}$ to $\textbf{m}_H$ is unmodified. Upon finding the state $\textbf{m}_H$ which enables a high-dimensional feasible trajectory from start to goal via $\textbf{n}_H$, the full trajectory $\textbf{n}_H(t)$ is constructed by concatenating $\textbf{m}_H(t)$ up to $\textbf{m}_H$ and the newly repaired trajectory, $\textbf{r}_H(t)$, starting from $\textbf{m}_H$ (line \ref{line:concat}). The final trajectory is obtained by warm starting the optimization with the trajectory $\textbf{n}_H(t)$ as the seed and relaxing all the waypoint and derivative constraints (Fig. \ref{fig:topus}-Right) until convergence or trajectory becoming infeasible, whichever occurs first. We remark that, within \textproc{GenerateTrajectory()}, the trajectory is checked for collision and feasibility only until the waypoint $\textbf{n}_H$ indicated by time $t_{\textbf{n}_H}$ (lines \ref{line:feas1}, \ref{line:feas2}) although the trajectory connects all the way from start to goal via $\textbf{n}_H$. The validity of the full trajectory is checked in \textproc{Main()} (line \ref{line:feas3}) to be considered as a potential goal candidate (line \ref{line:feas3}-\ref{line:goalcand}).

\subsection{Completeness Analysis of INSAT}
\label{sec:analysis} 
\label{sec:completeness}
We import the notations $\mathcal{X}^{obs}$, $\mathcal{G}_L$ from \ref{sec:insatalg}. $\mathcal{G}_L = (\mathcal{V}_L, \mathcal{E}_L)$ where $\mathcal{V}_L$ and $\mathcal{E}_L$ are set of vertices and edges, $\mathcal{X}^{free}_L=\mathcal{X}_L\setminus\mathcal{X}^{obs}$, $\boldsymbol{\tau}_\mathcal{G}$ be any path in $\mathcal{G}_L$, $\boldsymbol{\tau}_L(t)$ be the low-dimensional trajectory and $\boldsymbol{\tau}_H(t)$ be the high-dimensional trajectory that is snap continuous.  

\textbf{Assumption (AS):} If there exists $\boldsymbol{\tau}_L(t) \in \mathcal{X}^{free}_L$ then there exists a corresponding path $\boldsymbol{\tau}_\mathcal{G}$ in $\mathcal{G}_L$
\eqnvspace
\begin{equation*}
    \boldsymbol{\tau}_\mathcal{G} = \{(v, v^{\prime}) \mid v, v^{\prime} \in \mathcal{V}_L, (v, v^{\prime}) \in \mathcal{E}_L, \mathcal{T}(v, v^{\prime}) \subseteq \mathcal{X}_L^{free} \}
    \label{eq:trajingraph}
\eqnvspace
\end{equation*}
where $\mathcal{T}(v, v^{\prime})$ is the tunnel around the edge $(v, v^{\prime})$ (Fig. \ref{fig:tunnel}).

\begin{theorem}
\label{theo:obs}
$\exists \ \boldsymbol{\tau}_H(t) \in \mathcal{X}^{free}_H \implies \exists \ \boldsymbol{\tau}_L(t) \in \mathcal{X}^{free}_L$
\end{theorem}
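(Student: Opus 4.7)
The plan is to construct the low-dimensional trajectory $\boldsymbol{\tau}_L(t)$ explicitly from the given high-dimensional trajectory $\boldsymbol{\tau}_H(t)$ via the differential flatness map, and then argue that the two trajectories agree on every physical pose variable that determines collision, so freeness in $\mathcal{X}_H$ transfers to freeness in $\mathcal{X}_L$.

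First, I would set up a projection $\pi : \mathcal{X}_H \to \mathcal{X}_L$ induced by differential flatness. From any state $\textbf{s}_H = [\textbf{x}^\intercal, \dot{\textbf{x}}^\intercal, \ddot{\textbf{x}}^\intercal, \dddot{\textbf{x}}^\intercal]^\intercal$ the position component $\textbf{x}$ is immediate, and Eq.~\ref{eq:flatacc} (equivalently Eq.~\ref{eq:constraintmatrix}) recovers the thrust direction $\textbf{z}^B$ from $\ddot{\textbf{x}}$, which in turn determines $(\phi,\theta)$ uniquely (yaw being decoupled and not part of $\mathcal{X}_L$). Set $\boldsymbol{\tau}_L(t) := \pi(\boldsymbol{\tau}_H(t))$. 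Because $\boldsymbol{\tau}_H(t)$ is snap-continuous, $\ddot{\textbf{x}}(t)$ is $C^2$, so the flatness inversion produces $\phi(t),\theta(t)$ that are continuous; hence $\boldsymbol{\tau}_L(t)$ is a well-defined continuous trajectory in $\mathcal{X}_L$.

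Next I would argue collision-freeness is preserved under $\pi$. Obstacles are defined with respect to the robot's physical occupancy, which for a quadrotor (ignoring yaw) is completely determined by the pose variables $(\textbf{x},\phi,\theta)$; derivatives of position such as $\dot{\textbf{x}},\ddot{\textbf{x}},\dddot{\textbf{x}}$ do not move any geometry. Therefore $\mathcal{X}^{obs}_H = \pi^{-1}(\mathcal{X}^{obs}_L)$ (the obstacle set in $\mathcal{X}_H$ is just the cylindrical lift of $\mathcal{X}^{obs}_L$ along the derivative fibers). Taking any $t\in[0,T]$, $\boldsymbol{\tau}_H(t)\in \mathcal{X}^{free}_H$ implies $\pi(\boldsymbol{\tau}_H(t))\notin \mathcal{X}^{obs}_L$, i.e.\ $\boldsymbol{\tau}_L(t)\in \mathcal{X}^{free}_L$. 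This yields the desired existence.

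The main obstacle is the modeling step rather than the mechanical one: making precise that $\mathcal{X}^{obs}_H$ is indeed the lift of $\mathcal{X}^{obs}_L$, i.e.\ that no collision predicate in the high-dimensional formulation depends on velocity, acceleration, or jerk (and that yaw, which is absent from $\mathcal{X}_L$, does not affect occupancy for the quadrotor model used). This is consistent with Sec.~\ref{sec:collfeas}, but it must be invoked explicitly. A minor additional point to check is that the flatness inversion used for $(\phi,\theta)$ is single-valued along the trajectory, which follows because the thrust magnitude $f^B > 0$ (the quadrotor never freefalls through $\ddot{\textbf{x}}+g\textbf{z}^W=0$ along a feasible trajectory), so $\textbf{z}^B$ is uniquely determined at every $t$.
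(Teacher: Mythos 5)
Your proof is correct and follows essentially the same route as the paper's: both construct the differential-flatness projection from $\mathcal{X}_H$ to $\mathcal{X}_L$ and observe that $\mathcal{X}^{free}_H$ is exactly the preimage of $\mathcal{X}^{free}_L$ under that map (equivalently, that the obstacle set depends only on pose), so the image of a free high-dimensional trajectory is a free low-dimensional trajectory. The additional care you take with continuity of $(\phi,\theta)$ and single-valuedness of the flatness inversion is detail the paper's one-line surjection argument omits, but the core idea is identical.
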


\begin{proof}
Using quadrotor's differential flatness all the variables of $\mathcal{X}_L$ can be recovered from the variables in $\mathcal{X}_H$. So the map $\mathcal{M}^H_{L}: \mathcal{X}_H \mapsto \mathcal{X}_L$ is a surjection. But $\mathcal{X}^{free}_H = \{\textbf{x}_H \in \mathcal{X}_H \mid \mathcal{M}^H_L(\textbf{x}_H) \in \mathcal{X}_L^{free}\}$ and hence the map $\mathcal{M}^{H(free)}_{L}: \mathcal{X}_H^{free} \mapsto \mathcal{X}_L^{free}$ is also a surjection.
\end{proof}

\begin{theorem}[Completeness]
If $\exists \ \boldsymbol{\tau}_H(t) \in \mathcal{X}^{free}_H$, then INSAT is guaranteed to find a $\boldsymbol{\tau}^\prime_H(t)\in\mathcal{X}^{free}_H$.
\end{theorem}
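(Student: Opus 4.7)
The plan is to chain Theorem~\ref{theo:obs} with Assumption (AS) to expose a discrete path in $\mathcal{G}_L$ whose tunnels all lie inside $\mathcal{X}_L^{free}$, and then argue that INSAT is guaranteed to ride this path to the goal by exploiting two structural features of Alg.~\ref{alg:topus}: (i) WA* over the finite discretization $\mathcal{G}_L$ must eventually relax every edge along this path---the re-expansion clause on lines~\ref{line:closed}--\ref{line:deepcopy} deep-copies any state already in CLOSED so that it can re-enter OPEN with a refreshed $g$-value---and (ii) \textproc{GenerateTrajectory} never returns a non-result: if both the direct optimization (lines~\ref{line:trajtop11}--\ref{line:retbasicphase}) and the repair loop (lines~\ref{line:repair}--\ref{line:retrepair}) fail, line~\ref{line:tunnel} falls back to the tunnel trajectory itself with a discrete-$\infty$ cost tag.

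First I would make the existence chain explicit. From the hypothesis $\exists\,\boldsymbol{\tau}_H(t)\in\mathcal{X}_H^{free}$, Theorem~\ref{theo:obs} yields $\exists\,\boldsymbol{\tau}_L(t)\in\mathcal{X}_L^{free}$, and (AS) then furnishes a graph path $\boldsymbol{\tau}_\mathcal{G}=\{(v_0,v_1),\dots,(v_{k-1},v_k)\}$ with $v_0=\textbf{s}_L^{start}$, $v_k=\textbf{s}_L^{goal}$, and $\mathcal{T}(v_i,v_{i+1})\subseteq\mathcal{X}_L^{free}$ for every $i$. Lifting each tunnel back through the surjection $\mathcal{M}_L^{H(free)}$ established in the proof of Theorem~\ref{theo:obs}, each tunnel carries a non-empty family of snap-continuous, collision-free high-dimensional realizations, whose concatenation (appropriately time-parameterized) is a single $\boldsymbol{\tau}_{\mathcal{T}}(t)\in\mathcal{X}_H^{free}$---the object that the tunnel fallback on line~\ref{line:tunnel} effectively returns.

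Next I would chase an inductive termination argument. Inducting on the index $i$ of the path: when the search expands $v_i$ and enumerates its successors, one of them is $v_{i+1}$; \textproc{GenerateTrajectory} is invoked with $\textbf{s}_L=v_i$ and $\textbf{n}_L=v_{i+1}$, and in the worst case returns the corresponding piece of $\boldsymbol{\tau}_{\mathcal{T}}(t)$, which passes both \textproc{IsCollisionFree} and \textproc{IsInputFeasible}. When this mechanism reaches $i=k-1$ so that $\textbf{n}_L=\textbf{s}_L^{goal}$, the full-trajectory feasibility check on line~\ref{line:feas3} fires, $g(\textbf{s}_L^{goal})$ becomes finite, and the while-loop guard on line~\ref{line:term} breaks, producing the desired $\boldsymbol{\tau}^\prime_H(t)\in\mathcal{X}_H^{free}$.

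The main obstacle, I expect, is justifying that the tunnel fallback is simultaneously collision-free \emph{and} input-feasible under the componentwise box constraints $|\dot{\textbf{x}}|\preceq\dot{\textbf{x}}_{max}$, $|\ddot{\textbf{x}}|\preceq\ddot{\textbf{x}}_{max}$, $|\dddot{\textbf{x}}|\preceq\dddot{\textbf{x}}_{max}$. Geometric containment comes for free from (AS), but dynamic admissibility requires invoking time-dilation of flat trajectories: replacing $t\mapsto\boldsymbol{\tau}(t)$ by $t\mapsto\boldsymbol{\tau}(t/\kappa)$ rescales the $k$-th derivative by $\kappa^{-k}$ without altering the trajectory's geometric image, so for $\kappa$ large enough all three bounds are met simultaneously. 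Once this slow-traversal observation is nailed down, completeness of INSAT up to the chosen discretization of $\mathcal{G}_L$ follows immediately.
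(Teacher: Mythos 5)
Your skeleton matches the paper's almost exactly: chain Theorem~\ref{theo:obs} with \textbf{AS} to obtain a path $\boldsymbol{\tau}_\mathcal{G}$ in $\mathcal{G}_L$, then induct along the search, using the tunnel-trajectory fallback of line~\ref{line:tunnel} as the guarantee that \textproc{GenerateTrajectory} never fails to extend a feasible high-dimensional trajectory by one more edge. You are in fact more explicit than the paper about the WA* mechanics (re-expansion via deep copy, the termination guard) and about where the real difficulty sits, namely the existence of a tunnel trajectory that is simultaneously collision-free, boundary-condition-satisfying, snap continuous, and input feasible. The paper simply asserts this existence, remarking that it ``can be shown using trigonometric bases but it is beyond the scope of this proof,'' so on the skeleton you are at least as careful as the authors.

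The gap is in the one place where you go beyond the paper: the time-dilation argument does not work for this system. First, slowing down a flat quadrotor trajectory does \emph{not} preserve its attitude profile: by Eq.~\ref{eq:flatacc} the body axis $\textbf{z}^B$ is aligned with $\ddot{\textbf{x}} + g\textbf{z}^W$, so rescaling $\ddot{\textbf{x}}$ by $\kappa^{-2}$ drives the attitude toward hover rather than leaving it fixed. This breaks the endpoint constraints --- the successor $\textbf{n}_L$ prescribes $(\phi,\theta)$, which via Eq.~\ref{eq:constraintmatrix} pins the acceleration vector at the waypoint, so that acceleration is not yours to rescale --- and it also breaks geometric containment, since collision checking (Sec.~\ref{sec:collfeas}) uses the oriented ellipsoid and a tilted gap that admits the fast trajectory need not admit the slowed, more level one. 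Second, ``geometric containment comes for free from (AS)'' conflates two claims: \textbf{AS} says the tunnel $\mathcal{T}(v,v^\prime)$ is obstacle-free, not that some snap-continuous curve meeting both endpoint derivative/attitude conditions stays inside it; and surjectivity of $\mathcal{M}_L^{H(free)}$ gives pointwise preimages, not a smooth trajectory through them. That existence claim is exactly the crux the paper defers, and your $\boldsymbol{\tau}_\mathcal{T}(t)$ construction assumes it rather than proves it. To close the argument you would need to exhibit a curve confined to the tunnel whose endpoint accelerations match the flatness-induced attitude constraints and whose interior accelerations and jerks respect the box bounds --- a boundary-value construction (the paper's hinted trigonometric bases, or a spline with explicit derivative bounds in terms of the tunnel width), not a uniform time reparameterization.
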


\begin{figure}[h]
    \center
    \def\svgwidth{0.4\columnwidth}
    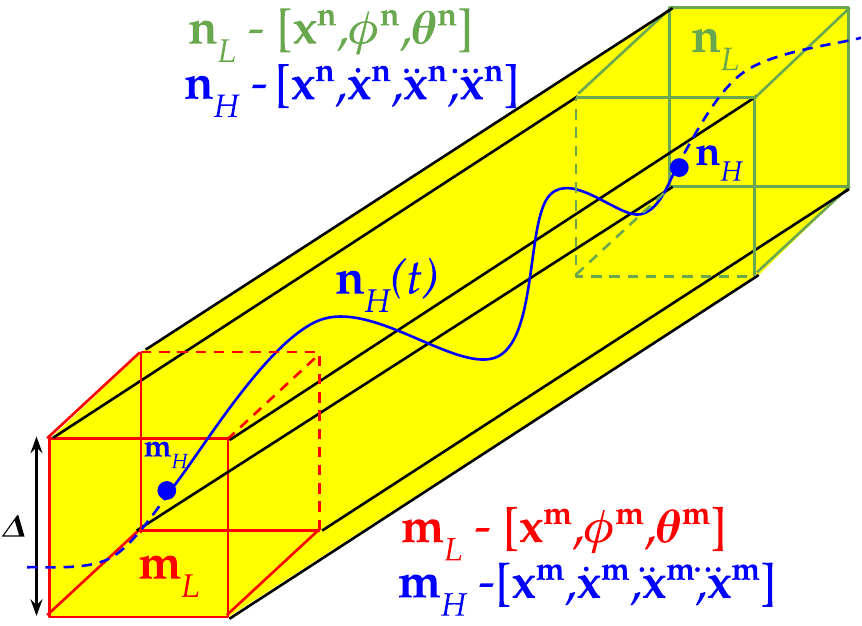
    \caption{\small Part of the high-dimensional trajectory $\textbf{n}_H(t)$ from $\textbf{s}_L^{start}$ to $\textbf{s}_L^{goal}$ via the expanded node $\textbf{m}_L$ and its successor $\textbf{n}_L$. The portion of $\textbf{n}_H(t)$ between $\textbf{m}_L$ and $\textbf{n}_L$ is guaranteed to lie within the tunnel (yellow) formed by $\textbf{m}_L$ and $\textbf{n}_L$ and is called as tunnel trajectory.}
    \label{fig:tunnel}
\vspace{-0.3cm}
\end{figure}

\begin{proof}
\textbf{Inference (IN):} If \textbf{AS} holds, it is enough to search $\mathcal{G}_L$ instead of $\mathcal{X}^{free}_L$. Then from Theorem. \ref{theo:obs}  we can deduce that there exists a $\boldsymbol{\tau}_\mathcal{G}$ in $\mathcal{G}_L$ if $\exists \ \boldsymbol{\tau}_H(t) \in \mathcal{X}^{free}_H$.


Thus to prove the completeness of INSAT, we have to show that Alg. \ref{alg:topus} finds a $\boldsymbol{\tau}^\prime_H(t) \in \mathcal{X}^{free}_H$ for any $\boldsymbol{\tau}_\mathcal{G}$ in $\mathcal{G}_L$ (\emph{i.e} converse of \textbf{IN}). We prove by induction. At $i$th step of INSAT, let $\mathcal{G}_L^{i} = (\mathcal{V}_L^i, \mathcal{E}_L^i)$ be the low-dimensional graph for which there exists a $\boldsymbol{\tau}_H^i(t) \in \mathcal{X}_H^{free}$ from $\textbf{s}_L^{start}$ to any $\textbf{s}_L^i \in \mathcal{V}_L^i$. The induction is to prove that, at $(i+1)$th step, after adding any number of nodes to get $\mathcal{G}_L^{i+1} = (\mathcal{V}_L^{i+1}, \mathcal{E}_L^{i+1})$, INSAT is guaranteed to find $\boldsymbol{\tau}_H^{i+1}(t) \in \mathcal{X}_H^{free}$ from $\textbf{s}_L^{start}$ to every $\textbf{s}_L^{i+1} \in \mathcal{V}_L^{i+1}$. Let $\textbf{m}_L^i \in \mathcal{V}_L^i$ be the node expanded at $(i+1)$th step from $\mathcal{G}_L^i$ to generate a successor $\textbf{n}_L^{i+1} \in \mathcal{V}^{i+1}_L$ and the graph $\mathcal{G}_L^{i+1}$. We know that $\textbf{m}^{i}_H(t) \in \mathcal{X}^{free}_H$. So even if the basic (lines \ref{line:trajtop11}-\ref{line:retbasicphase}) and the repair (lines \ref{line:repair}-\ref{line:retrepair}) phases fail (Sec. \ref{sec:insatalg}), Alg. \ref{alg:topus} falls back to finding the tunnel trajectory to concatenate with $\textbf{m}^{i}_H(t)$ (line \ref{line:tunnel}). The tunnel trajectory between $\textbf{m}^i_H$ and $\textbf{n}^{i+1}_H$ (i) is collision-free under \textbf{AS} (ii) satisfies the boundary pose and derivative constraints (iii) snap continuous. The existence of such a tunnel trajectory can be shown using trigonometric bases but it is beyond the scope of this proof. The ``base case'' of $\mathcal{G}_L^i, i=0$ with 1 node ($\textbf{s}^{start}_L$) is collision-free $\textbf{s}^{start}_H(t) \in \mathcal{X}^{free}_H$. And INSAT finds $\boldsymbol{\tau}_H^{i+1}(t) \in \mathcal{X}_H^{free}$ even at $(i+1)$th step. Hence, INSAT is a provably complete algorithm.
\end{proof}

\subsection{Trajectory Feasibility}
\label{sec:feas}
To plan for aggressive trajectories in cluttered environments, we approximate the shape of the quadrotor as an ellipsoid to capture attitude constraints and check for collision. During a state expansion, once the high-dimensional polynomial trajectory is found from the start to goal via a successor, it is checked for any violation of dynamics and control input (thrust and angular velocity) limits. 

\subsubsection{Input Feasibility}
\label{sec:infeas}
We use a recursive strategy introduced in \cite{andreaeffiprim} to check jerk input trajectories for input feasibility by binary searching and focusing only on the parts of the polynomial that violate the input limits. The two control inputs to the system are thrust and the body rate in the body frame. For checking thrust feasibility, the maximum thrust along each axis is calculated independently from acceleration (Eq. \ref{eq:flatacc}), by performing root-finding on the derivative of the jerk input polynomial trajectory. The maximum/minimum value among all the axes is used to check if it lies within the thrust limits. For body rate, its magnitude can be bounded as a function of the jerk and thrust (Eq. \ref{eq:angvel}). Using this relation, we calculate the body rate along the trajectory and check if it entirely lies within the angular velocity limits. Note that, in the implementation, these two feasibility tests are done in parallel. 

\subsubsection{Collision Checking}
\label{sec:collfeas}
We employ a two level hierarchical collision checking scheme. The first level checks for a conservative validity of the configuration and refines for an accurate collision check only if the first level fails. In the first level, we approximate the robot as a sphere and inflate the occupied cells of the voxel grid with its radius. This lets us treat the robot as a single cell and check for collision in cells along the trajectory. The second level follows the ellipsoid based collision checking that takes the actual orientation of the quadrotor into account \cite{sikangse3}. By storing the points of the obstacle pointcloud in a KD-tree, we are able to crop a subset of the points and efficiently check for collisions only in the neighborhood of the robot. 



\begin{figure*}
    \begin{subfigure}{\textwidth}
    \centering
        \includegraphics[width=\textwidth]{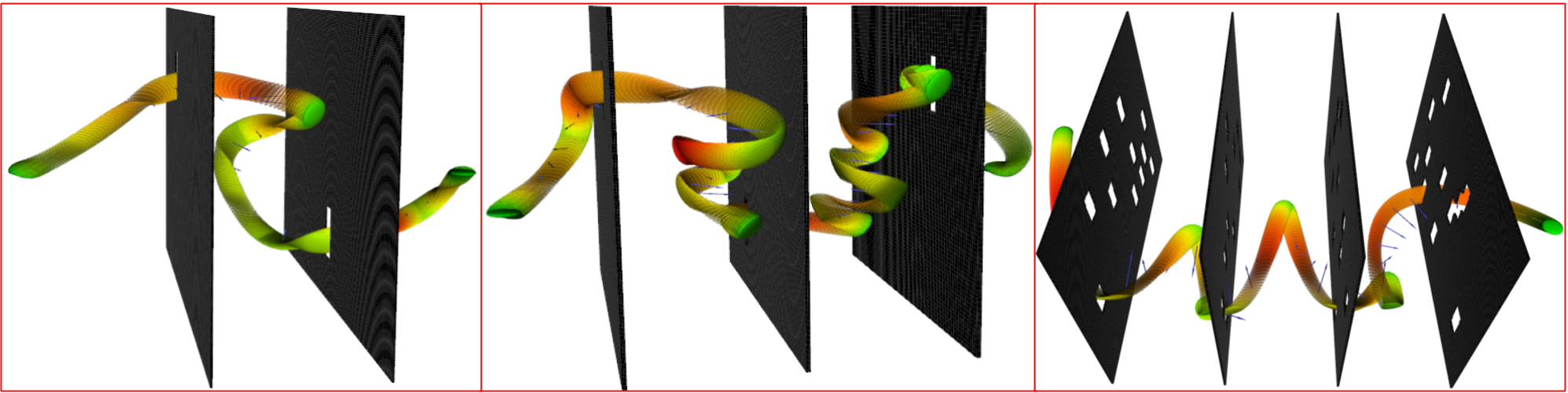}
    \caption{Side views. LEFT: \#Walls: 2, \#Holes/wall: 1. CENTER: \#Walls: 3, \#Holes/wall: 1. RIGHT: \#Walls: 4, \#Holes/wall: 11}
    \label{fig:walldemoa}
    \end{subfigure}
    \begin{subfigure}{\textwidth}
    \centering
        \includegraphics[width=\textwidth]{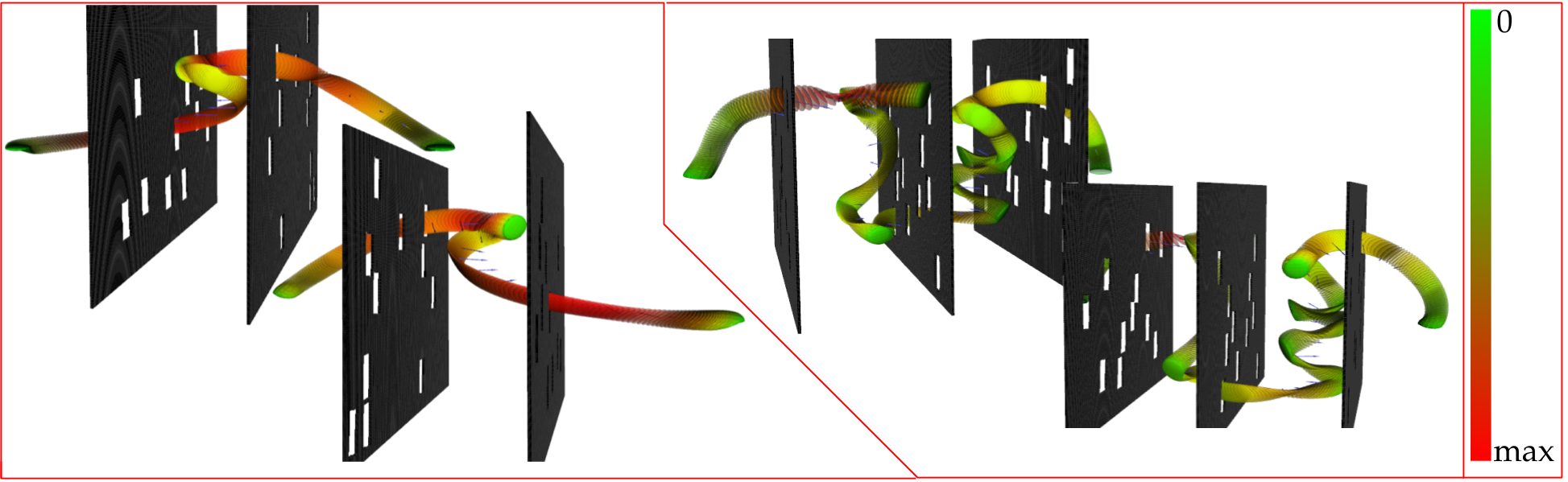}
    \caption{L\&R side views. LEFT: \#Walls: 2, \#Holes/wall: 11. CENTER: \#Walls: 3, \#Holes/wall: 11. RIGHT: Velocity magnitude}
    \label{fig:walldemob}
    \end{subfigure}
    \caption{\small Visualization of trajectory planned by the proposed method in the walls and windows environment. The environment consists of parallel walls with narrow holes (windows) smaller than the size of the quadrotor. The trajectory is represented as a sequence of ellipsoid which approximates the quadrotor's shape to plan in attitude constrained environments. The color gradient from green to red expresses the magnitude of instantaneous velocity while the arrows along the trajectory denote the magnitude and direction of acceleration. The dynamically stable agile behavior of the planner is analyzed in Sec. \ref{sec:structenv}.}
    \label{fig:walldemo}
\vspace{-0.3cm}
\end{figure*}

\begin{table*}[t]
\centering
\resizebox{\textwidth}{!}{%
\begin{tabular}{|c|c|c|c|c|c|c|c|c|c|c|}
\hline
\multirow{2}{*}{\#Walls} & \multirow{2}{*}{\begin{tabular}[c]{@{}c@{}}\#Windows\\ per wall\end{tabular}} & \multirow{2}{*}{Wall gap (m)} & \multicolumn{2}{c|}{Planning Time (s)} & \multicolumn{2}{c|}{Success Rate (\%)} & \multicolumn{2}{c|}{Solution Cost ($\times 10^5$)} & \multicolumn{2}{c|}{Execution Time (s)} \\ \cline{4-11} 
 &  &  & INSAT & Base A & INSAT & Base A & INSAT & Base A & INSAT & Base A \\ \hline
\multirow{6}{*}{2} & \multirow{2}{*}{1} & 5 & 19.37 $\pm$ 17.34 & 122.5 $\pm$ 88.44 & 100 & 36 & 3.7 & 7.18 & 9.12 $\pm$ 1.04 & 10.4 $\pm$ 1.18 \\ \cline{3-11} 
 &  & 9 & 29.76 $\pm$ 28.41 & 180.47 $\pm$ 93.22 & 100 & 24 & 5.5 & 7.48 & 9.42 $\pm$ 1.54 & 10.81 $\pm$ 1.71 \\ \cline{2-11} 
 & \multirow{2}{*}{5} & 5 & 46.58 $\pm$ 49.76 & 97.15 $\pm$ 70.01 & 100 & 100 & 6 & 6.16 & 6.93 $\pm$ 1.89 & 8.97 $\pm$ 2.25 \\ \cline{3-11} 
 &  & 9 & 22.07 $\pm$ 38.89 & 77.64 $\pm$ 58.8 & 100 & 100 & 5.5 & 6.9 & 8.88 $\pm$ 2 & 9.49 $\pm$ 2.58 \\ \cline{2-11} 
 & \multirow{2}{*}{10} & 5 & 19.13 $\pm$ 21.81 & 66.62 $\pm$ 56.51 & 100 & 100 & 4.21 & 6.9 & 9.45 $\pm$ 1.42 & 9.19 $\pm$ 2.53 \\ \cline{3-11} 
 &  & 9 & 30.56 $\pm$ 24.08 & 60.81 $\pm$ 43.66 & 100 & 100 & 4.3 & 6.14 & 6.87 $\pm$ 1.62 & 8.45 $\pm$ 1.69 \\ \hline
\multirow{3}{*}{3} & 1 & 5 & 83.33 $\pm$ 68.54 & 112.33 $\pm$ 87.24 & 100 & 24 & 8.69 & 9.38 & 11.5 $\pm$ 3.21 & 13.7 $\pm$ 2.74 \\ \cline{2-11} 
 & 5 & 5 & 62 $\pm$ 80.97 & 224.60 $\pm$ 309.67 & 100 & 100 & 8 & 7.3 & 9.88 $\pm$ 1.66 & 10.8 $\pm$ 2.91 \\ \cline{2-11} 
 & 10 & 5 & 18.7 $\pm$ 18.9 & 59.76 $\pm$ 56.73 & 100 & 100 & 5.01 & 6.99 & 8.83 $\pm$ 1.84 & 9.84 $\pm$ 2.15 \\ \hline \hline
\end{tabular}%
}
\resizebox{\textwidth}{!}{%
\begin{tabular}{|c|c|c|c|c|c|c|c|c|c|c|c|c|}
\hline
\multirow{2}{*}{Map} & \multicolumn{3}{c|}{Planning Time (s)} & \multicolumn{3}{c|}{Success Rate (\%)} & \multicolumn{3}{c|}{Solution Cost ($\times 10^5$)} & \multicolumn{3}{c|}{Execution Time (s)} \\ \cline{2-13} 
 & INSAT & Base-A & Base-B & INSAT & Base-A & Base-B & INSAT & Base-A & Base-B & INSAT & Base-A & Base-B \\ \hline
Willow Garage (2.5D) & 40.05 $\pm$ 77.09 & 18.73 $\pm$ 46.7 & 2.55 $\pm$ 1.18 & 100 & 100 & 6 & 3.33 $\pm$ 4.92 & 3.73 $\pm$ 2.54 & 4.34 $\pm$ 1.1 & 14.5 $\pm$ 6.14 & 7.78 $\pm$ 5.33 & 2.54 $\pm$ 2 \\ \hline
Willow Garage (3D) & 57.64 $\pm$ 97.24 & 89.8 $\pm$ 88.31 & 6.53 $\pm$ 2.49 & 100 & 100 & 10 & 5 $\pm$ 5.27 & 1.56 $\pm$ 0.91 & 7.38 $\pm$ 2.67 & 10.11 $\pm$ 6.8 & 3.21 $\pm$ 1.4 & 4.5 $\pm$ 1.78 \\ \hline
MIT Stata Center (3D) & 5 $\pm$ 7.18 & 83.2 $\pm$ 91.94 & 3.94 $\pm$ 1.2 & 100 & 14 & 32 & 6.68 $\pm$ 7.65 & 3.44 $\pm$ 2.33 & 2.24 $\pm$ 0.88 & 7.12 $\pm$ 3.44 & 5.7 $\pm$ 3.34 & 7.7 $\pm$ 2.93 \\ \hline
\end{tabular}%
}
\caption{\small Comparison of INSAT with search-based planning for aggressive SE(3) flight (Base-A) \cite{sikangse3} and polynomial trajectory planning (Base-B) \cite{roypoly}. The top table displays the average and standard deviation of the results for walls and windows environment and the bottom table for indoor office environment. Note that INSAT consistently outperforms the baselines across different types of environments.}
\label{table:struct}
\vspace{-0.7cm}
\end{table*}

\section{Experiments and Results}
\label{sec:results}
\vspace{-0.3cm}

We evaluate the empirical performance of INSAT in simulation against two baselines in two types of environments: 1) a \textit{walls and windows} environment that mimics an array of narrowly spaced buildings each containing several windows smaller than the radius of the quadrotor and 2) a cluttered \textit{indoor office} environment, namely Willow Garage and MIT Stata Center \cite{mitdataset} maps. Together the environments convey a story of a quadrotor aggressively flying through several tall raised office buildings. The baseline methods include search-based planning for aggressive SE(3) flight (Base-A) \cite{sikangse3} and polynomial trajectory planning (Base-B) \cite{roypoly}. We used the AscTec Hummingbird quadrotor \cite{asctec} in the Gazebo simulator \cite{gazebo} as our testing platform. All the methods are implemented in C++ on a 3.6GHz Intel Xeon machine. 
\vspace{-0.3cm}

\begin{figure}
    \begin{subfigure}{\columnwidth}
    \centering
        \includegraphics[width=\columnwidth]{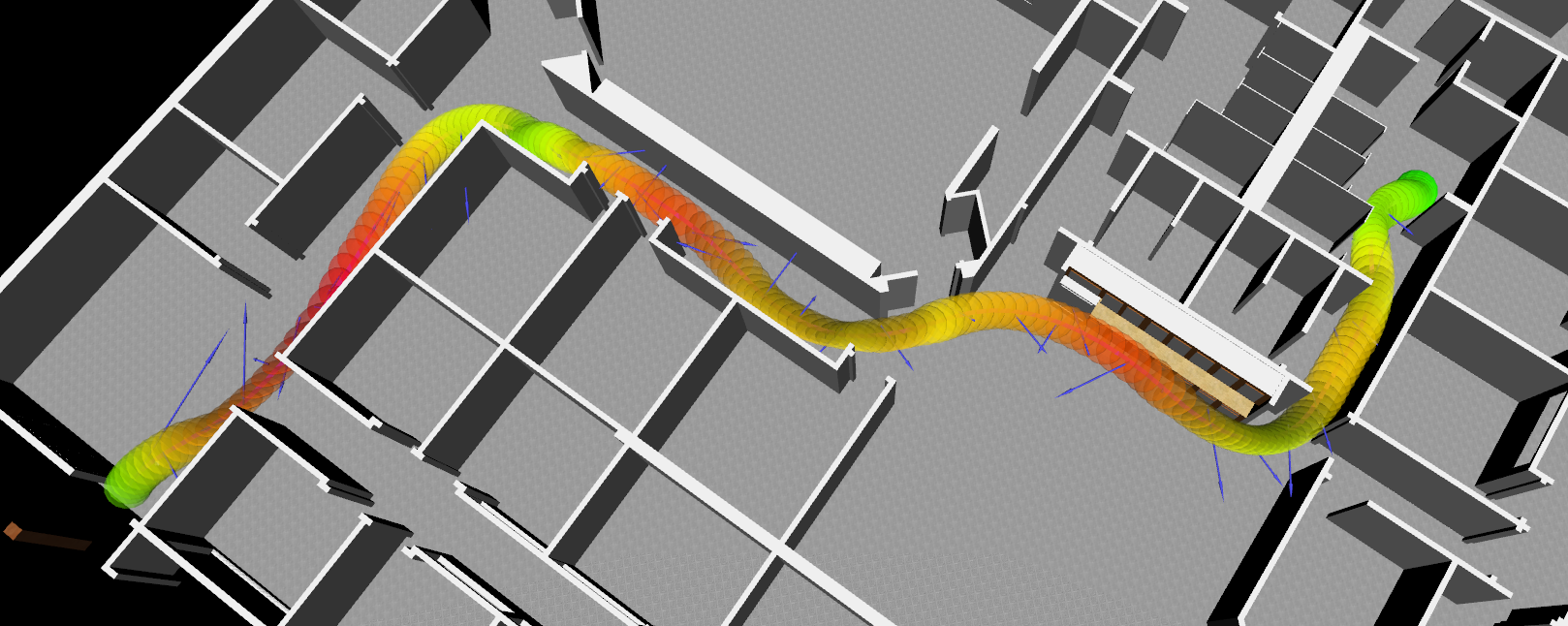}
    \caption{Willow Garage office environment}
    \end{subfigure}
    \begin{subfigure}{\columnwidth}
    \centering
        \includegraphics[width=\columnwidth]{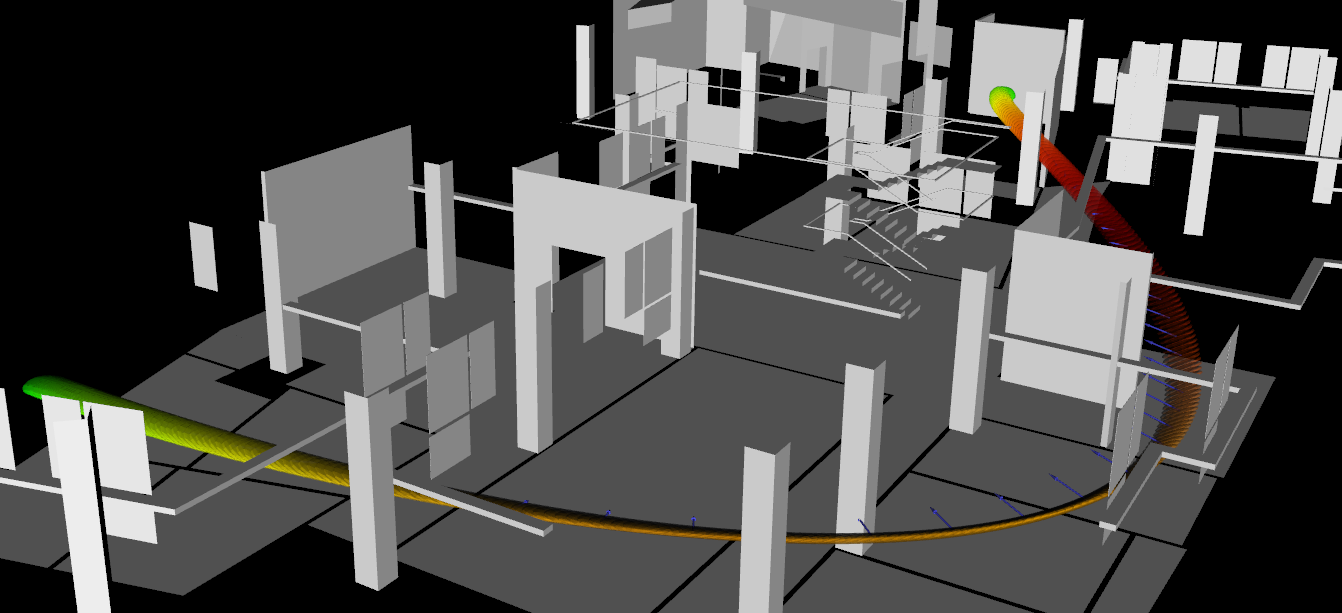}
    \caption{MIT Stata Center}
    \end{subfigure}
    \caption{INSAT in indoor office environments in 3D mode. Trajectories are expressed in the same format as Fig. \ref{fig:walldemo}}
    \label{fig:indoorinsat}
\vspace{-0.1cm}
\end{figure}

\subsection{Walls and Windows Environment}
\label{sec:structenv}
For the walls and windows environment, we randomly generated several scenarios with arbitrary number of parallel walls where each wall contains random number of windows (gaps smaller than quadrotor's radius). The goal of the planner is to generate a trajectory to fly from one end of the parallel walls to the other by negotiating the windows and satisfying their corresponding attitude constraints. Note that the planner also has to figure out the right topology for the solution, \textit{i.e.}, the sequence of windows it can fly through to get to its goal. For this environment, we compared only against Base-A \cite{sikangse3} as the other baseline (Base-B) \cite{roypoly} does not explicitly handle attitude constraints needed to plan in these scenarios and therefore had a very low success rate. 


The planned trajectory from INSAT is visualized (Fig. \ref{fig:walldemo}) as a sequence of ellipsoids approximating the shape of quadrotor to handle SE(3) constraints. We evaluated INSAT and Base-A over 50 random start and goal states in each of the different environment scenarios (top half of Table \ref{table:struct}). For the heuristic, we computed 3D (x, y, z) distances to the goal while accounting for obstacles and assuming a point robot. To compute it, we ran a single 3D Dijkstra's search backwards from the goal to obtain distances for all the cells. The results show that INSAT consistently outperforms Base-A in terms of the trajectory computation time and execution time. All the methods are timed out after 300s. The success rate shows that INSAT finds a solution in every scenario as opposed to Base-A's varying levels of reliability. Specifically, we see that the Base-A struggles when the number of windows per wall is decreased making the planner vary altitude and find a window at different height to get through. This is because Base-A is a lattice search method whose performance strongly depends on parameters such as the density and the length of primitives in the lattice. While reproducing the results in their paper \cite{sikangse3}, we found that their planner used a 2.5D lattice (primitives are restricted to a single plane). Our scenario requires planning in 3D with varying altitude. Despite tuning the parameters to fit 3D configuration for Base-A, the exponential increase in computation combined with the discretization introduced by the lattice sacrificed their success rate.

\subsection{Indoor Office Environment}
We also tested INSAT on the same maps and planning dimensions reported in the baseline papers \textit{i.e} maps of Willow Garage (2.5D \cite{sikangse3} and 3D) and MIT Stata Center (3D) \cite{roypoly}. These are large, cluttered, office environments that contain a number of narrow gaps smaller than the size of quadrotor. The final trajectory from one example is shown in Fig. \ref{fig:indoorinsat} and the statistics are provided in the bottom half of Table. \ref{table:struct}. Willow Garage map has tight spaces and high obstacle density but uniform obstacle distribution along its height compared to the MIT map that has scattered obstacles with varying distribution. Thus, Base-B performs well only in the MIT map as it does not necessitate attitude constrained planning.

From the bottom half of Table \ref{table:struct} we see that INSAT has the highest success rate. For the baselines, we used the same parameters supplied by the authors. In 2.5D planning, Base-A is faster than INSAT as it has a low branching factor with precomputed motion primitives. However, this difference vanishes in 3D because of exponential complexity with longer times spent to escape local minimas in Base-A and relatively faster speeds of polynomial trajectory generation in INSAT. The parameters that determine INSAT's performance including planning time, continuity and obeying dynamic constraints are:
\vspace{-0.4cm}
\begin{table}[h!]
\centering
\resizebox{\columnwidth}{!}{%
\begin{tabular}{|c|c|c|c|c|c|c|c|}
\hline
$d\textbf{x}$ & $d\boldsymbol{\Theta}$ & $\textbf{\.x}_{max}$ & $\textbf{\"x}_{max}$ & $\dddot{\textbf{x}}_{max}$ & $\gamma$ & $f_{max}$ & $dt$ \\ \hline
0.2m & 0.1rad & 10m/s & $20m/s^2$ & $50m/s^3$ & 500 & 10N & 0.05s \\ \hline
\end{tabular}%
}
\end{table}
\vspace{-0.3cm}
\FloatBarrier where $d\textbf{x}$ and d$\boldsymbol{\Theta}$ are the linear and angular discretization used for low-dimensional search, $f_{max}$ is the maximum thrust, $dt$ is the time step used for collision checking and $\gamma$ is the penalty to prioritize control effort over execution time. The execution and trackability of the generated trajectories are evaluated in Gazebo simulator \footnote{A movie of INSAT in Gazebo simulator is available \href{https://bit.ly/2E34C66}{here}.}. One critical parameter is the resolution of the low-dimensional grid that guarantees the planner's completeness (refer Sec. \ref{sec:completeness}). 



\vspace{-0.4cm}
\subsection{INSAT vs Sequential (S) vs Lattice Search (L) methods}
S methods \cite{roypoly} like Base-A first search for a path ignoring the dynamics and then refine to find the feasible trajectory using trajectory optimization. L methods \cite{sikangse3} like Base-B discretize the entire full-dimensional space and precompute the lattice with motion primitives offline. INSAT finds plans with superior behavior compared to S and L because:

\textbf{Computational Complexity:} L methods have fundamental limitation as their performance significantly depends on the choice of discretization for the state and action space, the primitive length along which the control input is constant and the lattice density itself \cite{primcomplexity}. Additionally, solving the boundary value problem to generate primitives that connect the discrete cell centers can be difficult or impossible \cite{primcomplexity}. In our method, albeit $\mathcal{X}_L^{free}$ is discretized, there is no such discretization in $\mathcal{X}_H^{free}$, where we let the optimization figure out the continuous trajectory that minimizes the cost function (Eq. \ref{eq:obj}). As S methods decouple planning in $\mathcal{X}_L$ and $\mathcal{X}_H$, they cannot handle attitude constraints and is restricted to a path found in $\mathcal{X}_L$ when planning in $\mathcal{X}_H$. In S, replacing the entire trajectory found in $\mathcal{X}_L$ with tunnel trajectory (Fig. \ref{fig:tunnel}) can violate the limits of velocity or jerk. Note that INSAT actively tries to minimize such violations (lines \ref{line:wpsearch}-\ref{line:feas2}). Thus, as substantiated by our experiments, interleaving these schemes provide a superior alternative by minimizing the effect of discretization and keeping the full dimensional search tractable.

\textbf{Energy Accumulation Maneuvers:} In tight spaces, a quadrotor might have to perform a periodic swing or revisit a state to accumulate energy and satisfy certain pose constraints. So a high-dimensional trajectory solution might require revisiting a low-dimensional state with a different value for the high-dimensional variables (\emph{i.e.} same $\textbf{x}$ but different $\dot{\textbf{x}}$ or $\dddot{\textbf{x}}$). This is handled by duplicating the low-dimensional state if it is already expanded (lines \ref{line:closed}-\ref{line:deepcopy}). S methods cannot handle this case as they decouple planning in $\mathcal{X}_L$ and $\mathcal{X}_H$. Consequently, observe in Fig. \ref{fig:walldemo} that to negotiate a window in the wall, the quadrotor actively decides to fly in either direction relative to the window to accumulate energy such that an attitude constraint via acceleration (Eq. \ref{eq:constraintmatrix}) can be satisfied at the window. Another interesting behavior is the decision to fly down or rise up helically (Fig. \ref{fig:walldemoa}-CENTER and Fig. \ref{fig:walldemob}-CENTER) in between the tightly spaced walls in order to maintain stability or potentially avoid vortex ring states and simultaneously not reduce the speed by taking slower paths. Such a behavior leveraging the dynamic stability of the quadrotor along with the choice of windows to fly through via global reasoning is a direct consequence of interleaving trajectory optimization with grid-based search. 

\vspace{-0.4cm}
\section{Conclusion}
\label{sec:conc}


We presented INSAT, a meta algorithmic framework that interleaves trajectory optimization with graph search to generate kinodynamically feasible trajectories for aggressive quadrotor flight. We show that interleaving allows a flow of mutual information and help leverage the simplicity and global reasoning benefits of heuristic search over non-convex obstacle spaces, and mitigate the bottleneck introduced by the number of search dimensions and discretization using trajectory optimization. 

The trajectory generation method and graph search algorithm can be easily replaced with alternatives depending on the application. We also analysed the completeness property of the algorithm and demonstrated it on two very different environments. Finally, we note that our method is not just limited to quadrotor planning and can be easily applied to other systems like fixed-wing aircraft or mobile robots that have differentially flat representations \cite{diffflatsys}. To the best of our knowledge, INSAT is the first to interleave graph search with trajectory optimization for robot motion planning.

\vspace{-0.4cm}
\bibliographystyle{IEEEtran}
\bibliography{IEEEabrv,mybibfile}

\begin{thebibliography}{10}
\providecommand{\url}[1]{#1}
\csname url@rmstyle\endcsname
\providecommand{\newblock}{\relax}
\providecommand{\bibinfo}[2]{#2}
\providecommand\BIBentrySTDinterwordspacing{\spaceskip=0pt\relax}
\providecommand\BIBentryALTinterwordstretchfactor{4}
\providecommand\BIBentryALTinterwordspacing{\spaceskip=\fontdimen2\font plus
\BIBentryALTinterwordstretchfactor\fontdimen3\font minus
  \fontdimen4\font\relax}
\providecommand\BIBforeignlanguage[2]{{%
\expandafter\ifx\csname l@#1\endcsname\relax
\typeout{** WARNING: IEEEtran.bst: No hyphenation pattern has been}%
\typeout{** loaded for the language `#1'. Using the pattern for}%
\typeout{** the default language instead.}%
\else
\language=\csname l@#1\endcsname
\fi
#2}}

\bibitem{cutlervarpitch}
M.~Cutler and J.~P. How, ``Analysis and control of a variable-pitch quadrotor
  for agile flight,'' \emph{J. Dyn. Sys., Meas., and Control}, 2015.

\bibitem{mellingerprecise}
D.~Mellinger, N.~Michael, and V.~Kumar, ``Trajectory generation and control for
  precise aggressive maneuvers with quadrotors,'' \emph{Int. J. Robot.
  Research}, vol.~31, no.~5, pp. 664--674, 2012.

\bibitem{andreajuggling}
M.~M{\"u}ller, S.~Lupashin, and R.~D'Andrea, ``Quadrocopter ball juggling,'' in
  \emph{Proc. IEEE/RSJ Int. Conf. Intell. Robots Syst.}\hskip 1em plus 0.5em
  minus 0.4em\relax IEEE, 2011, p. 5113.

\bibitem{scaramuzzaonboard}
D.~Falanga, E.~Mueggler, M.~Faessler, and D.~Scaramuzza, ``Aggressive quadrotor
  flight through narrow gaps with onboard sensing and computing using active
  vision,'' in \emph{Proc. IEEE Int. Conf. Robot. Autom.}\hskip 1em plus 0.5em
  minus 0.4em\relax IEEE, 2017, pp. 5774--5781.

\bibitem{attitudeoptimal}
T.~Hirata and M.~Kumon, ``Optimal path planning method with attitude
  constraints for quadrotor helicopters,'' in \emph{Proc. IEEE Int. Conf. Adv.
  Mechatronic Sys.}\hskip 1em plus 0.5em minus 0.4em\relax IEEE, 2014, pp.
  377--381.

\bibitem{roypoly}
C.~Richter, A.~Bry, and N.~Roy, ``Polynomial trajectory planning for aggressive
  quadrotor flight in dense indoor environments,'' in \emph{Robot.
  Research}.\hskip 1em plus 0.5em minus 0.4em\relax Springer, 2016, pp.
  649--666.

\bibitem{kinodynamic}
S.~M. LaValle and J.~J. Kuffner~Jr, ``Randomized kinodynamic planning,''
  \emph{Int. J. Robot. Research}, vol.~20, no.~5, pp. 378--400, 2001.

\bibitem{sikangse3}
S.~Liu, K.~Mohta, N.~Atanasov, and V.~Kumar, ``Search-based motion planning for
  aggressive flight in se (3),'' \emph{IEEE Robot. Autom. Lett.}, vol.~3,
  no.~3, pp. 2439--2446, 2018.

\bibitem{lqrtrees}
R.~Tedrake, I.~R. Manchester, M.~Tobenkin, and J.~W. Roberts, ``Lqr-trees:
  Feedback motion planning via sums-of-squares verification,'' \emph{Int. J.
  Robot. Research}, vol.~29, no.~8, pp. 1038--1052, 2010.

\bibitem{anytime}
E.~A. Hansen and R.~Zhou, ``Anytime heuristic search,'' \emph{J. Artificial
  Intelligence Research}, vol.~28, pp. 267--297, 2007.

\bibitem{maxara}
M.~Likhachev, G.~J. Gordon, and S.~Thrun, ``Ara*: Anytime a* with provable
  bounds on sub-optimality,'' in \emph{Advances in neural information
  processing systems}, 2004, pp. 767--774.

\bibitem{estquad}
G.~Loianno, C.~Brunner, G.~McGrath, and V.~Kumar, ``Estimation, control, and
  planning for aggressive flight with a small quadrotor with a single camera
  and imu,'' \emph{IEEE Robot. Autom. Lett.}, pp. 404--411, 2016.

\bibitem{murraydf}
M.~J. Van~Nieuwstadt and R.~M. Murray, ``Real-time trajectory generation for
  differentially flat systems,'' \emph{Int. J. Robust Nonlinear Control:
  IFAC-Affiliated Journal}, vol.~8, no.~11, pp. 995--1020, 1998.

\bibitem{sikangsfc}
S.~Liu, M.~Watterson, K.~Mohta, K.~Sun, S.~Bhattacharya, C.~J. Taylor, and
  V.~Kumar, ``Planning dynamically feasible trajectories for quadrotors using
  safe flight corridors in 3-d complex environments,'' \emph{IEEE Robot. Autom.
  Lett.}, vol.~2, no.~3, pp. 1688--1695, 2017.

\bibitem{tedrakeiris}
R.~Deits and R.~Tedrake, ``Efficient mixed-integer planning for uavs in
  cluttered environments,'' in \emph{Proc. IEEE Int. Conf. Robot. Autom.}\hskip
  1em plus 0.5em minus 0.4em\relax IEEE, 2015, pp. 42--49.

\bibitem{tedrakeminp}
B.~Landry, R.~Deits, P.~R. Florence, and R.~Tedrake, ``Aggressive quadrotor
  flight through cluttered environments using mixed integer programming,'' in
  \emph{Proc. IEEE Int. Conf. Robot. Autom.}\hskip 1em plus 0.5em minus
  0.4em\relax IEEE, 2016, pp. 1469--1475.

\bibitem{sikanglqmt}
S.~Liu, N.~Atanasov, K.~Mohta, and V.~Kumar, ``Search-based motion planning for
  quadrotors using linear quadratic minimum time control,'' in \emph{Proc.
  IEEE/RSJ Int. Conf. Intell. Robots Syst.}, 2017, pp. 2872--2879.

\bibitem{thetastar}
K.~Daniel, A.~Nash, S.~Koenig, and A.~Felner, ``Theta*: Any-angle path planning
  on grids,'' \emph{J. Artificial Intelligence Research}, vol.~39, 2010.

\bibitem{prm}
L.~E. Kavraki, P.~Svestka, J.-C. Latombe, and M.~H. Overmars, ``Probabilistic
  roadmaps for path planning in high-dimensional configuration spaces,''
  \emph{IEEE tran. on Robot. Autom.}, vol.~12, no.~4, pp. 566--580, 1996.

\bibitem{rrtstar}
S.~Karaman and E.~Frazzoli, ``Sampling-based algorithms for optimal motion
  planning,'' \emph{Int. J. Robot. Research}, vol.~30, pp. 846--894, 2011.

\bibitem{narrowprm}
D.~Hsu, L.~E. Kavraki, J.-C. Latombe, R.~Motwani, S.~Sorkin, \emph{et~al.},
  ``On finding narrow passages with probabilistic roadmap planners,'' in
  \emph{Robotics: the algorithmic perspective: 1998 workshop on the algorithmic
  foundations of robotics}, 1998, pp. 141--154.

\bibitem{mylqr}
S.~Rajasekaran, R.~Natarajan, and J.~D. Taylor, ``Towards planning and control
  of hybrid systems with limit cycle using lqr trees,'' in \emph{Proc. IEEE/RSJ
  Int. Conf. Intell. Robots Syst.}\hskip 1em plus 0.5em minus 0.4em\relax IEEE,
  2017, pp. 5196--5203.

\bibitem{realtimelqr}
A.~Majumdar and R.~Tedrake, ``Funnel libraries for real-time robust feedback
  motion planning,'' \emph{Int. J. Robot. Research}, pp. 947--982, 2017.

\bibitem{mellingerflat}
D.~Mellinger and V.~Kumar, ``Minimum snap trajectory generation and control for
  quadrotors,'' in \emph{Proc. IEEE Int. Conf. Robot. Autom.}, 2011, pp.
  2520--2525.

\bibitem{andreaindep}
M.~Hehn and R.~D'Andrea, ``Quadrocopter trajectory generation and control,''
  \emph{IFAC proceedings Volumes}, vol.~44, pp. 1485--1491, 2011.

\bibitem{cowlingpoly}
I.~D. Cowling, O.~A. Yakimenko, J.~F. Whidborne, and A.~K. Cooke, ``Direct
  method based control system for an autonomous quadrotor,'' \emph{J.
  Intelligent \& Robotic Systems}, vol.~60, no.~2, pp. 285--316, 2010.

\bibitem{andreaeffiprim}
M.~W. Mueller, M.~Hehn, and R.~D'Andrea, ``A computationally efficient motion
  primitive for quadrocopter trajectory generation,'' \emph{IEEE Trans.
  Robot.}, vol.~31, no.~6, pp. 1294--1310, 2015.

\bibitem{pohlwastar}
I.~Pohl, ``Heuristic search viewed as path finding in a graph,''
  \emph{Artificial intelligence}, vol.~1, no. 3-4, pp. 193--204, 1970.

\bibitem{mitdataset}
M.~Fallon, H.~Johannsson, M.~Kaess, and J.~J. Leonard, ``The mit stata center
  dataset,'' \emph{Int. J. Robot. Research}, vol.~32, pp. 1695--1699, 2013.

\bibitem{asctec}
A.~Technologies, ``Ascending technologies, gmbh,'' 2012.

\bibitem{gazebo}
N.~Koenig and A.~Howard, ``Design and use paradigms for gazebo, an open-source
  multi-robot simulator,'' in \emph{2004 Proc. IEEE/RSJ Int. Conf. Intell.
  Robots Syst.}, vol.~3, pp. 2149--2154.

\bibitem{primcomplexity}
M.~Pivtoraiko, I.~A. Nesnas, and A.~Kelly, ``Autonomous robot navigation using
  advanced motion primitives,'' in \emph{IEEE Aerospace Conf.}, 2009, pp. 1--7.

\bibitem{diffflatsys}
R.~M. Murray, M.~Rathinam, and W.~Sluis, ``Differential flatness of mechanical
  control systems: A catalog of prototype systems.''\hskip 1em plus 0.5em minus
  0.4em\relax Citeseer.

\end{thebibliography}

\end{document}